\newcommand{\cmark}{\ding{51}}
\theoremstyle{plain}
\theoremstyle{claim}
\newtheorem*{claim*}{Claim}
\theoremstyle{rethinking}
\theoremstyle{researchquestion}
\newtheorem{researchquestion}{RQ}
\theoremstyle{findings}
\theoremstyle{fact}
\theoremstyle{definition}
\newtheorem*{definition*}{Definition}
\theoremstyle{lemma}
\theoremstyle{proper}
\theoremstyle{remark}
\title{  {\color{blue}OVM}, {\color{blue}O}utcome-supervised {\color{blue}V}alue {\color{blue}M}odels for Planning\\ in Mathematical Reasoning}
\author{Fei Yu$^{1,2}$, Anningzhe Gao$^{2}$, Benyou Wang$^{1,2}$ \\
  The Chinese Unviersity of Hong Kong, Shenzhen, China \\
  Shenzhen Research Institute of Big Data \\
  222043013@link.cuhk.edu.cn, gaoanningzhe@sribd.cn,  wangbenyou@cuhk.edu.cn \\
  \url{https://github.com/FreedomIntelligence/OVM}
  }
\begin{document}
\maketitle

\begin{abstract}

Large language models (LLMs) often struggle with maintaining accuracy throughout multiple multiple reasoning steps, especially in mathematical reasoning where an error in earlier steps can propagate to subsequent ones and it ultimately leading to an incorrect answer.
To reduce error propagation, guided decoding is employed to direct the LM decoding on a step-by-step basis. We argue that in guided decoding, assessing the potential of an incomplete reasoning path can be more advantageous than simply ensuring per-step correctness, as the former approach leads towards a correct final answer. This transforms the task into a \textit{value estimation} problem in planning.



Inspired by the findings that \textit{outcome supervision for guided decoding essentially acts as a value model}, we propose Outcome-supervised Value Model (OVM) that employs outcome supervision for training a value model, which prioritizes steps that lead to accurate conclusions. Furthermore, the OVM eliminates the need for labor-intensive annotations of step-level correctness, thereby significantly enhancing its scalability. Our experiments on two multi-step mathematical reasoning datasets, GSM8K and Game of 24, demonstrate the superior performance of the OVM model. Notably, in GSM8K, our \textbf{OVM-7B model achieves state-of-the-art results among LLMs up to  13B parameters}; especially it does not utilize GPT-4 or code execution. These findings offer a novel perspective on the role of outcome supervision in training value models for multi-step reasoning tasks and provide theoretical justification for its advantage in value estimation for guided decoding.
\end{abstract}

\section{Introduction}


Multi-step reasoning problems are challenging for even large language models (LLMs) (\citealp{selection-inference23}; \citealp{compositionality-gap22}; \citealp{cot22}). 
Chain of Thought (CoT) outputs a series of intermediate reasoning steps before the final answer, which significantly improves the performance (\citealp{cot22}; \citealp{cot-bbh23}). 

\textbf{Verifying complete solutions}
Recent studies (\citealp{gsm8k21}, \citealp{deepmind-process22}, \citealp{openai-process23}) have focused on training a verifier, also referred to as a `\textit{reward} model', to verify the correctness of a complete solution among various candidates (\citealp{gsm8k21}).
This training generally involves two types of supervision for training: \textit{outcome supervision} and \textit{process supervision}. 
Recent research has demonstrated a clear advantage of \textit{process supervision} over \textit{outcome supervision} for training reward models in terms of verifying complete reasoning paths (\citealp{openai-process23}). 


\textbf{Guided decoding during intermediate steps}
However, errors often happen during the decoding of intermediate steps, leading to subsequent inaccuracies due to \textit{error propagation}.
For instance, GPT-4 often struggles with the initial step in the Game of 24, yet it can solve the task with multiple attempts (\citealp{tot23}). 
To this end,  guiding language decoding with step-level evaluation has been proposed (\citealp{xie2023guided}; \citealp{grace23}; \citealp{tot23}).
Paralleling the concepts of \textit{rewards} and \textit{values} in reinforcement learning,  the criteria for step-level evaluation could be  either future-agnostic (\citealp{xie2023guided}) or future-oriented  (\citealp{tot23}); the latter (i.e., value models) seems better as it has a longer-horizon perspective.  

\textbf{Value-based guided decoding. }
In line with value-based guided decoding that considers the potential of the possible future-generated solutions, the challenge lies in value estimation. Previous research primarily achieved this through extensive lookahead sampling or simulation to estimate the long-term returns (\citealp{rap2023}; \citealp{CoRe23}; \citealp{tot23}); this introduces an additional decoding cost during the inference of an LLM. An alternative method is to train a value model that enables value estimation during inference without the need for simulation. Inspired by the findings that \textit{outcome supervision for guided decoding essentially acts as a value model}, as found in this paper, we propose the use of outcome supervision to train a value model for value estimation without simulation during inference, called \textbf{OVM}.

Experiments are conducted on two popular multi-step mathematical reasoning datasets -- GSM8K~\citep{gsm8k21} and Game of 24~\citep{tot23}. In GSM8K, our OVM-7B model obtains state-of-the-art performance among models with up to 13B parameters, attaining a 84.7\% accuracy without resorting to supplementary datasets, GPT-4, or executing programs. 
In Game of 24,  OVM-7B reaches 78.7\% success rate with merely 20 nodes visited per step, in stark contrast to its 11\% greedy success rate and 11.7\% with majority voting over 100 reasoning paths. 
Furthermore, we demonstrate that our method attains competitive, and often superior, performance using fewer sampled reasoning paths compared to complete path verification, on both GSM8K and Game of 24. This indicates the effectiveness of OVM in value estimation as a future-oriented evaluation.


In summary, our contributions are three-fold. (1) \textbf{An in-depth analysis on guided decoding}:  we extend the previous discussion on outcome supervision and process supervision to the realm of guided decoding. We theoretically prove that \textit{outcome supervision for guided decoding is secretly a value model}.  (2) \textbf{A novel approach of OVM}: we propose Outcome Value Models for guided decoding that emphasize the potential correctness of the final answer rather than focusing solely on the current (partial) path’s correctness. Importantly,  OVM with \textit{outcome supervision} does not need costly step-level annotations typically required by \textit{process supervision}, making it more scalable. Moreover, it merely leverages the existing model and datasets without introducing external elements. (3) \textbf{Significance of OVM}:  OVM (7B) achieves state-of-the-art results in GSM8K among LLMs up to 13B parameters, even outperforming those using additional data, GPT-4, or code execution.







\section{Background}
\label{sec:background}


\begin{figure*}[h]
    \centering
    \subfigcapskip=-2pt
    \subfigure[\label{fig:reward_and_value}Reward and value]{
        \includegraphics[width=0.48\linewidth]{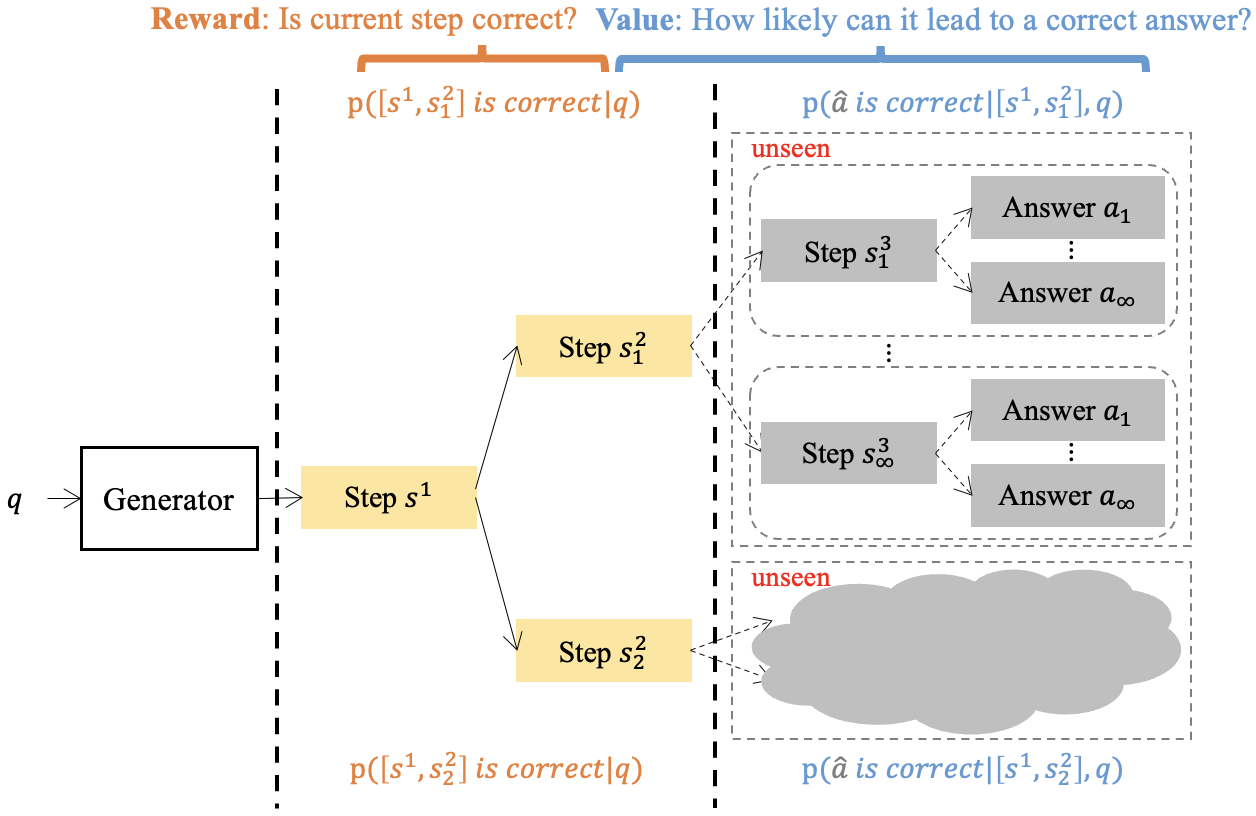}}
    \subfigure[\label{fig:ps_and_os}Outcome supervision and process supervision on training models to evaluate \textit{complete paths}]{
        \includegraphics[width=0.48\linewidth]{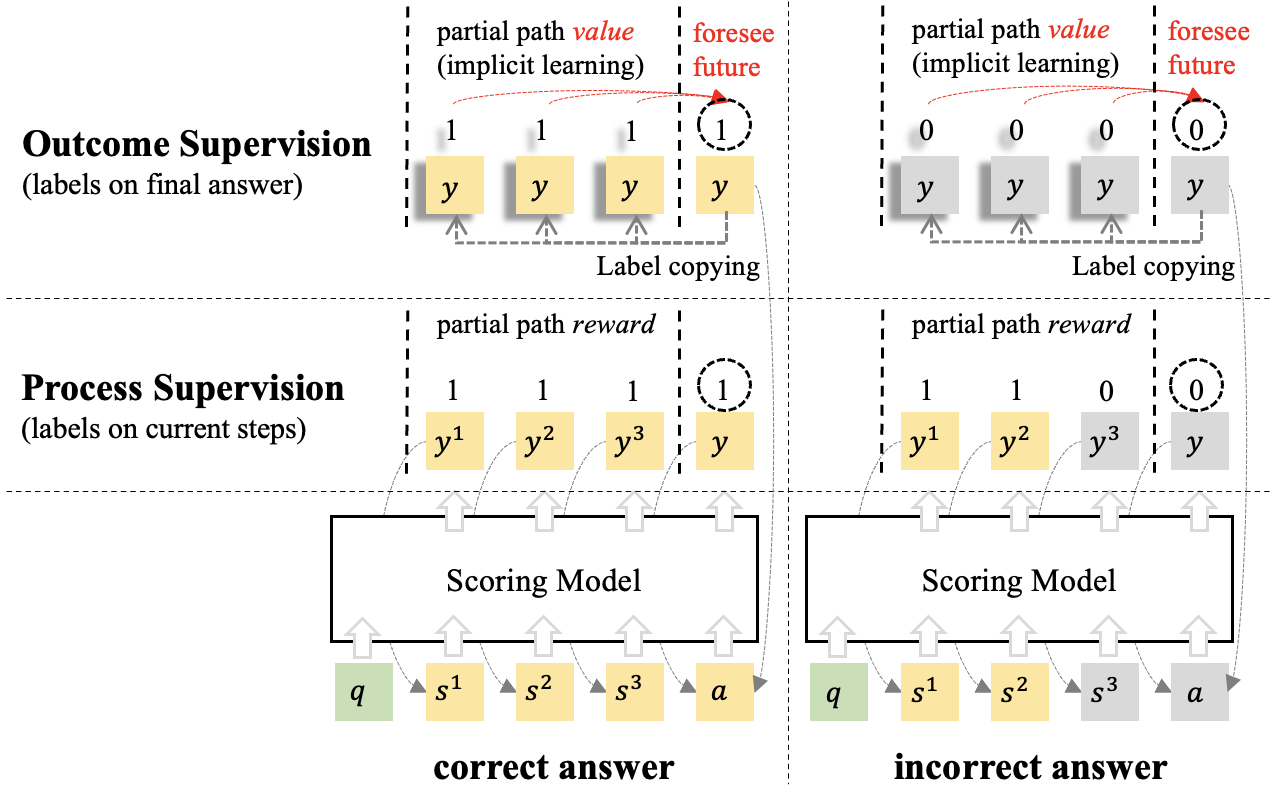}}
    \caption{(a): When evaluating partial paths (here for the first two steps), reward focuses on the current states, while value focuses on the unseen future outcomes. (b): Given a question $q$ and a solution path $[s^1,\cdots,s^m,a]$, models are trained to predict path correctness (circled output scalar on the last token). Outcome supervision replicates the final answer's correctness label across all steps (indicated by shaded labels), causing the model to implicitly learn to foresee the future, predicting values for partial paths. By contrast, process supervision details per-step correctness labels, causing the model to learn to predict step-level correctness, i.e. reward. Correct steps and answers are colored in \textcolor{yellow}{yellow} and incorrect ones in \textcolor{gray}{grey}.}
\end{figure*}

We  give the problem definition of mathematical reasoning and  guided decoding, as well as its dual paradigms (i.e., reward-based and value-based). The notations we used are summarized in Table~\ref{tab:notations}.


\subsection{Problem Defintion}
\label{sec:definition}

We first introduce the mathematical reasoning problem definition and then introduce our adopted paradigm.


\begin{definition*}
    \textbf{A mathematical reasoning question} \textit{$q$ requires a sequence of steps to be addressed, whose solution path is $S=[s^1, \dots, s^m, a]$, where $s^i$ represents the i-th step, $m$ is the number of steps, and $a$ is the final answer.} 
\end{definition*}




To alleviate the issue of potential error propagation from previous steps in a single solution chain, one approach is sampling multiple steps from the generator and filtering. 
This is called guided decoding, which incorporates a new evaluation criterion to select steps during model generation (\citealp{tot23}; \citealp{xie2023guided}; \citealp{grace23}; \citealp{alphazero-like23}).

\paragraph{Guided decoding}
Guided decoding intervenes in the generation process with a new evaluation criterion, in contrast to vanilla sampling which is solely based on the Language Model (LM)  probabilities. Specifically, for each step $t$, suppose the sampling size is $K$, the generator $\Phi$ produces a set of candidate paths $\mathbb{S}^{(1:t)} = \bigl\{S^{(1:t)}_k \bigl\}_{k=1}^{K}$ based on LM probabilities, where $S^{(1:t)}_k=[s^1_k,\dots,s^t_k]$ is the $k$-th partial path up to step t. Then, given an evaluation criterion $f(\cdot)$ that can score an incomplete path $S^{(1:t)}$, we select the top-scored paths with the beam size $b$ ($b < K$ for pruning), from which the generation continues 
\begin{equation*} \scriptsize
    \Bigg\{ S^{(1:t)}_k  \scalebox{1.8}{$\mid$} k \in \mathop{\mathrm{argtop}_b}\limits_{k=1,\cdots,K} f(S^{(1:t)}_k;q) \Bigg\}
\end{equation*}

This approach is primarily characterized by two categories of guiding criteria: reward and value, which are two concepts in reinforcement learning~\cite{sutton2018reinforcement}, see details in Section~\ref{sec:guided_models}.

\subsection{Reward-based and Value-based Guided Decoding}
\label{sec:guided_models}

In this subsection, we introduce  `reward'-based approaches  and `value'-based approaches for mathematical reasoning in Sec.~\ref{sec:reward} and \ref{sec:value}


\subsubsection{Reward-based Guided Decoding}
\label{sec:reward}

\textit{Reward}-based approaches (\citealp{xie2023guided}; \citealp{grace23}; \citealp{rap2023}), focusing on the generated steps, assess the correctness of the current steps in mathematical reasoning, i.e. $p(S^{(1:t)}\mbox{ is correct}|q)$.

\paragraph{Outcome supervision vs. process supervision}
In mathematical reasoning, reward models are well known in evaluating complete solution paths $p(S\mbox{ is correct}|q)$, also called ``verifiers'' (\citealp{gsm8k21}; \citealp{deepmind-process22}; \citealp{openai-process23}; \citealp{li2023stepv}; \citealp{grace23}). 
There are two supervision strategies to train a verifier, distinguished by the granularity of the supervision signals, we refer to Appendix~\ref{app:verifier} for training details.

\textbf{Outcome Supervision} simply focuses on the correctness of the final answer, at a coarser granularity. The trained model is called Outcome Reward Model (ORM) (\citealp{gsm8k21}; \citealp{deepmind-process22}; \citealp{openai-process23}).

 \textbf{Process Supervision} offers more fine-grained, step-wise labels of the solution path, providing per-step correctness. The trained model is called Process Reward Model (PRM) (\citealp{deepmind-process22}; \citealp{openai-process23}; \citealp{li2023making}).

Current research indicates that process supervision generally outperforms outcome supervision since the former adapts finer-grained supervision in \textit{verifying complete paths}~\citep{openai-process23}. 
However, in guided decoding that \textit{verifies incomplete paths},  typical reward models might overlook the current (incomplete) path's future implications, which will be further discussed in Section~\ref{sec:value}. 


\subsubsection{Value-based Guided Decoding}
\label{sec:value}

\textit{Value}-based approaches (\citealp{tot23}; \citealp{rap2023}; \citealp{alphazero-like23}) 
estimate the  expected future  \textit{rewards} when starting from a given state (i.e. the current incomplete reasoning path), which is future-oriented. This is contrast to the definition of \textit{rewards} that is determined only by the seen incomplete path and agnostic to the future path. 
As shown in Figure~\ref{fig:reward_and_value}, reward models assess paths in a backward direction (e.g., the correctness of seen steps) while  value models assess paths in a forward direction (e.g.,  the  potential correctness the final path with additional future unseen steps and the answer $\hat{a}$ ~\footnote{In reinforcement learning, value is defined as the expected cumulative reward it receives in the long run with a discount factor: $\sum_{j=1}^{m-t}\gamma^{j-1}R_{t+j}$. In our scenario, the discount factor is 1, all intermediate rewards $R_{t+1}R_{t+2}\cdots R_{m-1}$ are 0, and the final reward $R_m$ is 1 if the answer is correct otherwise 0. So the cumulative reward is either 1 or 0 dependent on the answer correctness. Therefore, the expected cumulative reward is exactly the probability of correct answers.
}).  
Interestingly,  we could term the \textit{value-based guided decoding} as ``\textit{planning}'' based on its nature of future orientation.

\section{Outcome Supervised Value Models for Guided Decoding}
\label{sec:methodology}

\subsection{Motivation}

\paragraph{Challenge of training value models}
Unlike labels of reward models can be annotated manually on a given (incomplete) reasoning path,  it is challenging to obtain ground truth of value models for each incomplete path during guided decoding. The reason is that it is computationally-heavy to calculate the expected rewards among all possible future paths starting from the seen (incomplete) path, especially the number of resulted sequences grows exponentially w.r.t the length of  reasoning paths.

\paragraph{Rationale behind outcome supervised guided decoding as a value model}
Therefore, the challenge in training value models lies in \textit{estimating or labeling the value of observed reasoning paths}. Recalling the types of supervision – either outcome or process – it's evident that process supervision is confined to paths already seen. However, outcome supervision appears to have the potential to assess the probable correctness of resulting final paths, starting from the current incomplete one. See this intuition in Figure~\ref{fig:ps_and_os}. Intriguingly, upon theoretical examination, we discover that \textbf{outcome supervision for guided decoding essentially acts as a value model}, as detailed in Sec.~\ref{sec:explanation}. This revelation has inspired the adoption of outcome-supervised value models specifically tailored for guided decoding.~\footnote{\citet{alphazero-like23} is a concurrent work with us for value model.}

\subsection{Outcome Supervision Leads to a Value Model for Guided Decoding}
\label{sec:explanation}
We show theoretically that, given binary labels of individual samples, outcome supervision implicitly estimates the global labels, or value, of the intermediate steps during the optimization process.

\begin{claim*}
    For an outcome-supervised model $f_{\bm{\theta}}(\cdot)$ parameterized by the optimal parameter $\bm{\theta}$,
    its score of $S^{(1:t)}$ is the approximated probability of it reaching a correct answer, i.e.,
    \begin{equation}
    f_{\bm{\theta}}(S^{(1:t)};q)=p(\hat{a}\mbox{ is correct}|S^{(1:t)},q)
    \end{equation}
\end{claim*}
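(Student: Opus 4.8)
The plan is to recognize that outcome supervision is, at its heart, a binary classification problem over prefixes, and that the population minimizer of its loss is exactly a conditional probability. First I would make the training objective explicit: under outcome supervision, every complete path $[s^1,\dots,s^m,a]$ carries a single correctness label $y\in\{0,1\}$ of its final answer, and — as emphasized in Figure~\ref{fig:ps_and_os} — this label is broadcast to each prefix $S^{(1:t)}$. Thus the effective training data consists of pairs $(S^{(1:t)},y)$, and the model $f_{\bm{\theta}}$ is fit with the binary cross-entropy (log) loss
\begin{equation*}
\mathcal{L}(\bm{\theta}) = \mathbb{E}\bigl[-y\log f_{\bm{\theta}}(S^{(1:t)};q) - (1-y)\log(1-f_{\bm{\theta}}(S^{(1:t)};q))\bigr],
\end{equation*}
where the expectation is over prefixes and their broadcast labels.

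Second, I would pin down the distribution of $y$ given a fixed prefix. The key modelling assumption is that the continuations used to generate labels are sampled from the same generator $\Phi$ that guided decoding uses at inference. Under this assumption, conditioned on $(S^{(1:t)},q)$, the label $y$ is a Bernoulli random variable that equals $1$ precisely when $\Phi$'s continuation yields a correct answer; hence
\begin{equation*}
\mathbb{E}[\,y \mid S^{(1:t)},q\,] = p(\hat{a}\text{ is correct}\mid S^{(1:t)},q).
\end{equation*}
By the footnote's identification of value with the probability of a correct answer (discount factor $1$, all intermediate rewards $0$), this conditional expectation is exactly the value of the partial path.

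Third, I would invoke the strict properness of the log loss. Conditioning on each fixed input $(S^{(1:t)},q)$ and differentiating the pointwise expected loss in the scalar prediction $\hat{p}$ shows the unique minimizer is $\hat{p} = \mathbb{E}[y\mid S^{(1:t)},q]$ (equivalently, by Gibbs' inequality the cross-entropy between the true Bernoulli and the predicted one is minimized exactly when they coincide). Because the optimal parameter $\bm{\theta}$ is assumed to attain the global minimum of $\mathcal{L}$, and the model class is expressive enough to realize this minimizer at every distinct input, the per-input optimum holds simultaneously for all prefixes, yielding $f_{\bm{\theta}}(S^{(1:t)};q)=p(\hat{a}\text{ is correct}\mid S^{(1:t)},q)$ as claimed.

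The main obstacle, I expect, is not the calculus of the log-loss minimizer — that part is routine — but making the distributional setup honest: one must state clearly that prefix labels are generated by continuations drawn from the same policy $\Phi$ used at inference, and that the global optimum is achieved pointwise (full expressivity plus exact optimization). If the training continuations came from a different distribution than the inference-time generator, the learned scalar would estimate the wrong conditional probability, so this assumption is precisely where the \emph{value model} interpretation genuinely rests, and it is the part I would be most careful to articulate.
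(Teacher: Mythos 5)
Your proposal is correct and rests on the same central idea as the paper's proof --- broadcast the outcome label to every prefix, then observe that pointwise minimization of a strictly proper loss forces the prediction at each prefix to equal the conditional expectation of the label, which (by the Bernoulli structure of correctness) is exactly $p(\hat{a}\text{ is correct}\mid S^{(1:t)},q)$. There are two differences worth noting. First, you use binary cross-entropy, whereas the paper's outcome supervision is defined with mean squared error (both in the claim's proof and in the training details of Appendix~A); your argument goes through verbatim for squared error since it is also strictly proper with the conditional mean as its unique pointwise minimizer, but as written you have proven the claim for a loss the paper does not use, so you should either switch to MSE or remark explicitly that the properness argument covers both. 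Second, the paper works at the finite-sample level: it differentiates the empirical objective in the free variable $\mathrm{v}^q_x=f_{\bm{\theta}}(x;q)$ and shows the optimum is the empirical frequency $\sum_{i,t}\Gamma(S_i^{(1:t)}=x)y_i/\sum_{i,t}\Gamma(S_i^{(1:t)}=x)$, i.e.\ a Monte Carlo estimate whose precision depends on the sampling, whereas you pass directly to the population expectation. Your population-level framing is cleaner and makes the key hidden assumption explicit --- that the labeling continuations are drawn from the same generator $\Phi$ used at inference, so the estimated conditional probability is the one relevant to guided decoding --- which the paper only implies by saying the generator produces the $n$ training paths. The paper's finite-sample version, in exchange, makes the approximation error visible ("whose estimation's precision depends on the sampling") and matches the Monte Carlo intuition it offers afterwards. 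Both versions share the same implicit reliance on full expressivity of the model class (the paper's "choose the model satisfying\ldots for all $x$" is your "expressive enough to realize this minimizer at every distinct input"), so neither is more rigorous than the other on that point.
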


\begin{proof}

Suppose for each question $q$, we have the generator producing $n$ solution paths $\{S_i\}_{i=1}^n$ with the corresponding answers $\{a_i\}_{i=1}^n$.
The label $y_i$ is 1 if $a_i$ is correct otherwise 0.
The mean squared error loss of outcome supervision is
\begin{equation} \small
l(S_i^{(1:t)},y_i; q)=\left(f_{\bm{\theta}}(S_i^{(1:t)}; q)-y_i\right)^2
\end{equation}

Given the training question set $\mathcal{Q}$, 
the overall objective is
\begin{equation} \small
L=\frac{1}{|\mathcal{Q}|}\sum_{q\in\mathcal{Q}}\frac{1}{n}\sum_{i=1}^n\sum_{t=1}^{m_i}\left(f_{\bm{\theta}}(S_i^{(1:t)}; q)-y_i\right)^2
\end{equation}

Denote $\mathrm{v}^{q}_{x}=f_{\bm{\theta}}(x;q)$, the partial derivation of $\mathrm{v}^{q}_{x}$ is
\begin{equation} \small
\nabla_{\mathrm{v}^{q}_{x}} L = \frac{1}{|\mathcal{Q}|}\frac{1}{n}\sum_{i=1}^n \sum_{t=1}^{m_i} 2\Gamma(S_i^{(1:t)}=x)(\mathrm{v}^{q}_{x} - y_i)
\end{equation}

Set $\nabla_{\mathrm{v}^{q}_{x}} L=0$, we can see
\begin{equation} \small
\mathrm{v}^{q}_{x} = \frac{\sum_{i=1}^n \sum_{t=1}^{m_i} \Gamma(S_i^{(1:t)}=x) y_i}{\sum_{i=1}^n \sum_{t=1}^{m_i} \Gamma(S_i^{(1:t)}=x)}
\end{equation}
which is $p(\hat{a}\mbox{ is correct}|x,q)$, whose estimation's precision depends on the sampling. Choose the model satisfying 
\begin{equation} \small
f_{\bm{\theta}}(x;q) = \frac{\sum_{i=1}^n \sum_{t=1}^{m_i} \Gamma(S_i^{(1:t)}=x) y_i}{\sum_{i=1}^n \sum_{t=1}^{m_i} \Gamma(S_i^{(1:t)}=x)}
\end{equation}
for all $x\in\{S_i^{(1:t)}\}$ is the optimal solution minimizing the loss function. Hence the optimal solution satisfies 
\begin{equation} \small
f_{\bm{\theta}}(S^{(1:t)};q)=p(\hat{a}\mbox{ is correct}|S^{(1:t)},q)
\end{equation}


\end{proof}

Therefore,
\begin{equation*}
\scriptsize
\begin{aligned}
f_{\bm{\theta}}(S;q)
&=p(a\text{ is correct}|S; q) \\
&=
    \begin{cases}
    \text{reward} & \text{when }[\overbrace{s^1, \dots, s^t, \dots, s^m,a}^{S}]\text{ i.e. }a \text{ is seen} \\\\
    \text{value} & \text{when }[\overbrace{s^1, \dots, s^t}^{S}, \dots, s^m,a]\text{ i.e. }a \text{ is unseen}
    \end{cases}
\end{aligned}
\end{equation*}

\paragraph{Intuitive Explanation}
This indirect method of probability estimation in outcome supervision simplifies the value model training process, which avoids the need for explicit step-level continual sampling and estimation for training labels. Instead, it leverages the binary correctness of individual samples as training labels, forcing the optimal solution to be the probability of being correct under mean square error, which is similar to the Monte Carlo method to estimate the expectation. 

\subsection{Rethinking Supervision for Guided Decoding: Outcome v.s. Process}
\label{sec:os_vs_ps}

With the above discussion, outcome supervision and process supervision can be different in the context of guided decoding. We claim that outcome supervision supersedes process supervision in this scenario for two reasons.

\paragraph{Outcome supervision is preferred due to its inherent future-guided orientation} 
For guided decoding, intuitively we should adopt a forward-looking approach that prioritizes the final answer's correctness over mere the current path's. 
This favors value models over typical reward models. Thus, outcome supervision, leading to value models, is preferred to process supervision that results in reward models, for partial path evaluation. 


\paragraph{Outcome supervision is labor-friendly without fine-grained annotations} 
In terms of future orientation, rewards can be modified to introduce such aspects, e.g. ``whether steps are correct and \textit{helpful} to the correct answer''. 
We acknowledge that such reward adjustments are useful for planning. 
However, annotating rewards at the step level is labor-intensive.
Furthermore, assessing steps' contribution to final answers, beyond mere correctness, increases the labor demands of reward labeling.
In contrast, outcome supervision only requires the final answer's correctness.


\section{Method}

\paragraph{Building a training set for OVM} 
Given a set of questions $\mathcal{Q}$ comprising $N$ training questions, we initially query the generator to produce $n$ solution paths $\mathbb{S} = \{S_1,\cdots,S_n\}$ for each question $q \in \mathcal{Q}$. This process yields $N\times n$ question-solution pairs. Subsequently, we determine the binary label for each question-solution pair by assessing the correctness of the final answer.

\paragraph{Training a value model with outcome supervision} 
The value model is implemented by adding a linear layer with a single bias parameter after the generator's final unembedding layer, separate from the generator~\citealp{gsm8k21}. The training objective is to minimize the mean squared error between the predicted value, based on the question and solution, and the binary label.

For comparative purposes, we implemented reward models trained through process supervision.~\footnote{Process supervision is only used in comparison, not in OVM training.} Detailed information can be found in Appendix~\ref{app:verifier}.

\paragraph{Inference - beam search with guided decoding}
During inference, we employ a beam search strategy guided by the OVM. Unlike the conventional beam search, which relies on token-level probability, our method is steered by the estimated values at each step. The algorithm is detailed in Algorithm~\ref{algo:beam_search}. 

\begin{algorithm}[h]
\small
\caption{\label{algo:beam_search}Value-Guided Beam Search}
\begin{algorithmic}[1]

\State $\textbf{Input:}$ Question $q$, Beam size $b$, Sampled steps per state $K$, Maximum step count $T$
\State $\textbf{Output:}$ Best solution sequence for $q$
\State $\textbf{Model:}$ Generator $\Phi$ and OVM $f$

\Procedure{ValueGuidedBeamSearch}{$q, b, K$}
    \State Initialize step sequences $\mathbb{S} \gets \{\}$
    \State Sample initial steps $\{s_1^1,\dots,s_K^1\}$
    \State Evaluate values $\{v_1^1,\cdots,v_K^1\}$ for each step
    \State Select top $b$ valued steps and add to $\mathbb{S}$
    \State $t \gets 1$
    \While{sequences in $\mathbb{S}$ are not complete and $t < T$}
        \State $\mathbb{S}_{\text{new}} \gets \{\}$
        \State $\mathcal{V} \gets \{\}$
        \For{each sequence $S^{(1:t)}$ in $\mathbb{S}$}
            \For{$i = 1$ to $K/b$}
                \State $S^{(1:t+1)}_i=\Phi(S^{(1:t)};q)$
                \State $v^{(1:t+1)}_i=f(S^{(1:t+1)}_i;q)$
                \State $\mathbb{S}_{\text{new}} \gets \mathbb{S}_{\text{new}}+S^{(1:{t+1})}_i$
                \State $\mathcal{V} \gets \mathcal{V}+v^{(1:t+1)}_i$
            \EndFor
        \EndFor

        \State $\mathbb{S}_{\text{new}} \gets $ top $b$ valued sequences from $(\mathbb{S}_{\text{new}},\mathcal{V})$
        \State $\mathbb{S} \gets \mathbb{S}_{\text{new}}$
        \State $t \gets t+1$
    \EndWhile
    \State \textbf{return} sequence with highest final value in $\mathbb{S}$
\EndProcedure

\end{algorithmic}
\end{algorithm}

\section{Experiment Results}
\label{sec:experiments}

\subsection{Experimental settings}

\paragraph{Benchmarks} We conduct experiments on two mathematical reasoning datasets, GSM8K~\citep{gsm8k21} and Game of 24~\citep{tot23}.

\paragraph{Baselines} We benchmark our method against leading models in GSM8K and the notable Tree-of-Thought in Game of 24~\citep{tot23}, as well as other guided decoding approaches. 
Additionally, we evaluate the efficacy of OVM planning against the vanilla sampling methods of our implemented generators, such as greedy search and post-processing of multiple solutions generated without guided decoding.

We conduct each inference experiment three times and present the average results along with their standard deviation. Given the variety of available beam sizes $b$ for each sampling size $K$, we simplify the reporting by only showcasing the best results from all possible beam sizes.~\footnote{For instance, the result for $K=20$ is the best one among $b\in(1,2,4,5,10)$.} Detailed results for different beam sizes can be found in Appendix~\ref{app:results}.

See the implementation details, including training and inference hyperparameters, in Appendix~\ref{app:implementation}.

\begin{table*}[t]
 \footnotesize
 \caption{Accuracy on GSM8K. In the third column, we mark models that use GPT for inference or are trained with GPT-generated data. Notably, we don't rely on GPT, data augmentation, and code execution (execute the complete code block outputting the final answer). SC denotes `Self-Consistency' and RM denotes `Reward Model'.}
  
 \label{tab:benchmark_gsm8k_others}
 \centering
 \begin{tabular}{lrccl}
 \toprule
 Model  & Size & GPT-3.5/4 & Data Augmentation  & Accuracy \\
 \midrule  
 \multicolumn{5}{l}{\textit{Open-Source Models without Code Execution}} \\
 MuggleMATH (\citealp{MuggleMATH23}) 
        & 7B   & \cmark   & \cmark             & 68.4\%    \\
 Arithmo-Mistral 
        & 7B   & \cmark   & \cmark             & 74.7\%    \\
 MetaMath-Mistral (\citealp{MetaMath23}) 
        & 7B   & \cmark   & \cmark             & 77.7\%    \\
 \hline
 MetaMath (\citealp{MetaMath23}) 
        & 13B  & \cmark   & \cmark             & 71.0\%    \\
 MuggleMATH (\citealp{MuggleMATH23}) 
        & 13B  & \cmark   & \cmark             & 74.0\%    \\
 \hline
 RFT (\citealp{RFT23})  
        & 70B  &          &                    & 64.8\%    \\
 WizardMath (\citealp{WizardMath23})
        & 70B   &         & \cmark             & 81.6\%    \\
 MuggleMATH (\citealp{MuggleMATH23}) 
        & 70B  & \cmark   & \cmark             & 82.3\%    \\
 MetaMath (\citealp{MetaMath23}) 
        & 70B  & \cmark   & \cmark             & 84.3\%    \\
 \textbf{Ours -- OVM} (Llama2-7B, K=100)  
        & \textit{7B}   &          &           & 73.7\% ± 0.4\%   \\
 \textbf{Ours -- OVM} (Mistral-7B, K=100) 
        & \textit{7B}   &          &           & \textbf{84.7}\% ± 0.3\%    \\
\midrule
 \multicolumn{5}{l}{\textit{Open-Source Models with Code Execution}} \\
 ToRA-Code (\citealp{ToRA-Code23}) 
        & 7B   & \cmark   & \cmark             & 72.6\%    \\
 ToRA-Code (\citealp{ToRA-Code23}) 
        & 13B  & \cmark   & \cmark             & 75.8\%    \\
 \makecell[l]{ToRA-Code (SC, K=50) (\citealp{ToRA-Code23}) } & 34B  & \cmark   & \cmark             & 85.1\%    \\
 ToRA (\citealp{ToRA-Code23}) 
        & 70B  & \cmark   & \cmark             & 84.3\%    \\
 \makecell[l]{ToRA (SC, K=50) (\citealp{ToRA-Code23})}  
        & 70B  & \cmark   & \cmark             & 88.3\%    \\
\midrule
  \multicolumn{5}{l}{\textit{Closed-Source Models}} \\
 \makecell[l]{PaLM (SC, K=32)  (\citealp{self-improve22})}  
        & 540B &          &                    & 82.1\%    \\
 \makecell[l]{DeepMind's+RM Verification (K=96)  (\citealp{deepmind-process22})}  
        & 70B  &          &                    & 87.3\%    \\
 GPT-4 (\citealp{GPT-4})  
        & -    & \cmark  &                    & 87.1\%    \\
 \makecell[l]{GPT-4 Code+Self-Verification (K=5) (\citealp{CSV23})}  
        & -   & \cmark   &                    & 97.0\%    \\


 \bottomrule   

 \end{tabular}

\end{table*}

\subsection{Overall Performance}

\paragraph{Benchmarking against current state-of-the-art approaches} 
The OVM performance in GSM8K and Game of 24 is detailed in Table~\ref{tab:benchmark_gsm8k_others} and Table~\ref{tab:benchmark_game24_others}, respectively. 
Notably, our Mistral-based 7B model surpasses all models under 70B in GSM8K. In the 7B category, excluding Mistral-based models, our Llama2-based 7B model achieves the highest performance. 
In the Game of 24, OVM planning significantly improves Llama2-7B's accuracy, increasing its accuracy from 11\% to a remarkable 78.7\% with 20 sampled solution paths.

\begin{table}[H]
 \small
 \caption{Accuracy on Game of 24. GPT-4's accuracy is from \citet{tot23}, and K of ToT is estimated from Figure 3 in their paper.}
  
 \label{tab:benchmark_game24_others}
 \centering
 \begin{tabular}{l|l}
 \toprule
                                        & Accuracy \\
 \midrule	
  GPT-4 CoT & 4.0\% \\
  GPT-4 SC (K=100) & 9.0\% \\
  GPT-4 ToT (K=60) & 74.0\% \\
  \midrule
  Fine-tuned Llama2-7B                  & 11.0\% \\
  Fine-tuned Llama2-7B SC (K=100)       & 11.7\% ± 1.3\% \\
  \textbf{Ours -- OVM} (Llama2-7B, K=20)         & 78.7\% ± 1.7\%\\
  \textbf{Ours -- OVM} (Llama2-7B, K=100)         & 98.3\% ± 1.2\%\\
 
 \bottomrule   
 \end{tabular}

\end{table}

\paragraph{Benchmarking against guided decoding approaches}
Table~\ref{tab:benchmark_gsm8k_guided} shows that OVM excels over most guided decoding approaches, with the exception of the GPT-based method. Remarkably, OVM achieves comparable results to the GPT-based method despite its smaller size (7B compared to 175B) and fewer sampled paths ($K=10$ versus $K=80$). Significantly, OVM improves the previous value-based SOTA by 18 absolute percentage points (from 63.2\% of CoRe to 81.2\%), eliciting the power of value-based methods.

\paragraph{Benchmarking against vanilla sampling baselines} Table~\ref{tab:benchmark_all_self} shows OVM planning generally outperforms ORM post-selection with the same number of sampled paths. 
An exception occurs with Mistral-7B at K=100 in GSM8K, where the gap between OVM and ORM approaches appears to reach saturation. 
As shown in Figure~\ref{fig:all_k_acc_proportion}, in both GSM8K and Game of 24, the accuracy improves with larger sampling sizes. The gap between ORM and OVM decreases as more paths are sampled. 

Notably, training an OVM merely reuses existing models and datasets, generating training solutions and labels internally. This approach outperforms those needing extra resources like code execution or data augmentation. Moreover, its compatibility with these techniques suggests the potential for further improved performance when used together.

\begin{table*}[t]
 \footnotesize
 \caption{Accuracy on GSM8K comparing with guided decoding approaches. `RM' denotes `Reward Model', and `VM' denotes `Value Model'. `finetuned' means the generator is tuned on the training dataset.}
  
 \label{tab:benchmark_gsm8k_guided}
 \centering
 \begin{tabular}{llrcp{3.2cm}l}
 \toprule
 Model  & Backbone & Setting & K & Type  & Accuracy \\
\midrule
 \multicolumn{5}{l}{\textit{Reward-based}} \\
 GRACE (\citealp{grace23})      
        & Llama-7B  & finetuned & 20   &  RM             & 30.9\%    \\
 GRACE (\citealp{grace23})      
        & T5-770M  & finetuned & 20   &  RM             & 36.3\%    \\
 
 SelfEval (\citealp{xie2023guided}) 
        & GPT3.5-Codex-175B & prompting & 80 & Prompting           & 85.5\%    \\
\midrule
\multicolumn{5}{l}{\textit{Value-based}} \\
 RAP (\citealp{rap2023})      
        & Llama-33B   & prompting & 10   &  Simulation              & 51.6\%    \\
 CoRe (\citealp{grace23})      
        & GPT-J-12B   & finetuned & 40   & Simulation              & 63.2\%    \\
 \citet{alphazero-like23}   
        & Llama2-7B   & finetuned &  10  &  VM             & 52.2\% ± 0.9\%    \\
 \citet{alphazero-like23}   
        & Llama2-7B   & finetuned &  50  &  Simulation+Aggregation             & 59.4\%    \\
 \textbf{Ours -- OVM} (Llama2-7B)  
        & Llama2-7B    & finetuned & 10         & VM          & 66.5\% ± 0.2\%   \\
 \textbf{Ours -- OVM} (Mistral-7B) 
        & Mistral-7B    & finetuned & 10         & VM          & \textbf{81.2}\% ± 0.6\% \\

 \bottomrule   

 \end{tabular}

\end{table*}

\begin{table*}[t]
 \small
 \caption{Accuracy on GSM8K and Game of 24. Results averaged over 3 runs are reported. K denotes sampling size.}
  
 \label{tab:benchmark_all_self}
 \centering
 \begin{tabular}{lll|ll|l}
 \toprule
 \multicolumn{3}{c|}{Method} & \multicolumn{2}{c|}{GSM8K} & Game of 24 \\
                        & &  & Llama2-7B & Mistral-7B   & Llama2-7B \\
 \midrule		
 
 \multirow{5}{*}{Vanilla Sampling} 
                                 & \multicolumn{2}{c|}{Greedy}                                       & 38.6\%  & 58.4\%   & 11\% \\
                                 \cline{2-6}
                                 & \multirow{2}{*}{SC}     & K=20  & 53.3\% ± 0.3\%      & 70.2\% ± 0.7\% & 10.3\% ± 1.7\% \\
                                 &                         & K=100 & 57.4\% ± 0.8\%     & 72.6\% ± 0.2\% & 11.7\% ± 1.3\% \\  
                                 \cline{2-6}
                                 & \multirow{2}{*}{ORM}    & K=20  & 65.5\% ± 0.7\%      & 81.8\% ± 0.2\% & 65.3\% ± 5.3\% \\
                                 &                         & K=100 & 71.9\% ± 0.6\%      & \textbf{84.7}\% ± 0.4\% & 95.3\% ± 0.5\% \\
                                 \cline{2-6}
                                 & \multirow{2}{*}{PRM}    & K=20  & 66.4\% ± 0.5\%      & - & 60.3\% ± 4.2\% \\                                 
                                 &                         & K=100 & 70.8\% ± 0.7\%      & - & 93.3\% ± 0.9\% \\
 
 \midrule
 \multirow{2}{*}{Planning}       & \multirow{2}{*}{OVM}    & K=20  & 69.0\% ± 0.3\%       & 82.6\% ± 0.1\% & 78.7\% ± 1.7\% \\
                                 &                         & K=100 & 73.8\% ± 0.4\%       & \textbf{84.7}\% ± 0.3\% & \textbf{98.3}\% ± 1.2\% \\
 \bottomrule   
 \end{tabular}

\end{table*}

\section{Analysis and Discussion}

This section seeks to answer the following two \textbf{R}esearch \textbf{Q}uestions (RQs).
\begin{researchquestion}
    Can OVM plan?
\end{researchquestion}

\begin{researchquestion}
    How is outcome supervision compared to process supervision for guided decoding?
\end{researchquestion}



\paragraph{Evaluation with correct answer proportion}
To assess planning effectiveness, we analyze the proportion of sampled solution paths yielding correct answers in the final sampling stage, immediately preceding the final solution selection. This offers insights into guided decoding's efficiency in steering towards correct answers.

\subsection{RQ1: Can OVM plan?}
\label{sec:rq1}

We use ``vanilla sampling'' as the baseline for comparison, which relies on random solution sampling based solely on LM probabilities, without guiding.

\paragraph{OVM is an effective planner guiding to the correct answers}
The result is shown in Figure~\ref{fig:all_k_acc_proportion}. 
Notably, in GSM8K, less than 35\% of the generator's randomly sampled solution paths are correct, and this proportion increases to over 65\% with OVM planning. Similarly, in the Game of 24, OVM planning significantly boosts the correct answer proportion from approximately just 10\% in vanilla sampling to an impressive 80\%. 
Additionally, in vanilla sampling, the proportion of correct solutions remains consistent across various sampling sizes. In contrast, OVM planning demonstrates improved benefits with increased sampling sizes, up to a point of saturation.


\subsection{RQ2: How is outcome supervision compared to process supervision for guided decoding?}
\label{sec:comparison}
We further compare the performance of reward models, trained under process supervision~\footnote{The training datasets and hyperparameters for the reward models are identical to those of OVM}, with OVM in guiding decoding for GSM8K and Game of 24. Due to resource constraints, we only conducted the experiments on Llama2-7B.

We investigate both typical and modified future-oriented rewards in our study. 
The former rewards steps (i.e. labeled as 1) for logical correctness, while the latter rewards steps that are not only correct but also contribute to the correct final answer. 
We refer to the model trained with this enhanced reward scheme as PRM-O, denoting its implicit consideration of future Outcomes.
See the details for per-step correctness annotations in Appendix~\ref{app:process_labels}.

\paragraph{Comparison in Game of 24} 
Figure~\ref{fig:game24_ep_proportion} demonstrates the evolving trends across training epochs, illustrating that both OVM and PRM-O effectively guide towards correct answers, in contrast to PRM's failure, highlighting the importance of anticipating outcomes.~\footnote{Two more training epochs are conducted for better visualization.} 
Notably, PRM-O shows faster convergence than OVM, but OVM eventually reaches a performance comparable to PRM-O. This indicates that outcome supervision, relying only on final answer correctness, may suffice for models to learn outcome evaluation, while more detailed step-level signals can accelerate this process.

\begin{figure}[H]
    \centering
    \includegraphics[width=1\linewidth]{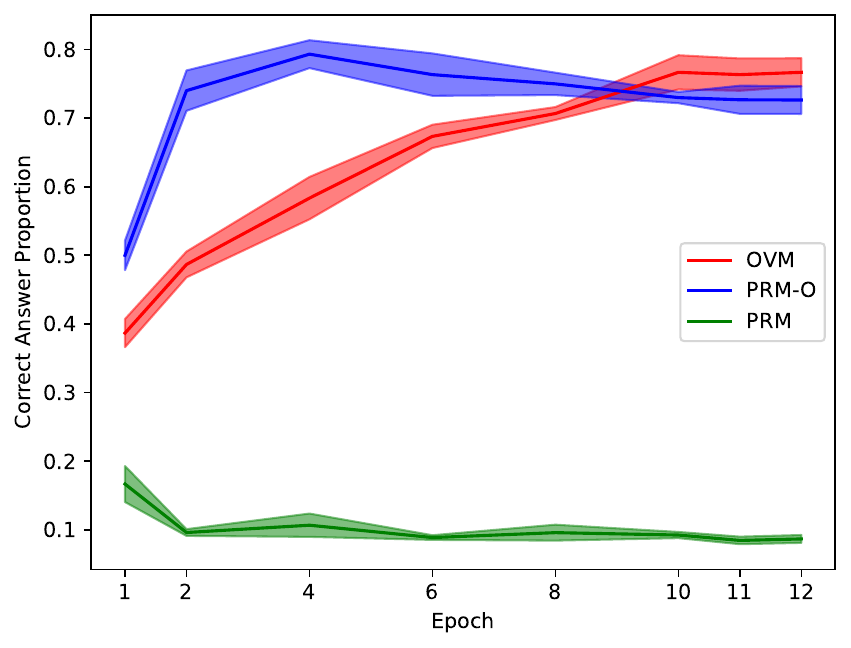}
    \caption{Comparison among OVM, PRM, and PRM-O in correct answer proportion in Game of 24 (K=20)}
    \label{fig:game24_ep_proportion}
\end{figure}


\paragraph{Comparison in GSM8K} 
See the results in Table~\ref{tab:comparison}. PRM-O outperforms PRM, consistently favoring anticipating outcomes. However, OVM and PRM show similar performance levels. We analyze the reason behind OVM's lack of superior performance over PRM as follows.

\begin{table}[H]
 \caption{Correct answer proportion in GSM8K in comparison among OVM, PRM, and PRM-O.}
  
 \label{tab:comparison}
 \small
 \centering
 \begin{tabular}{l|ll}
 \toprule
              & K=20                 & K=100 \\
 \midrule
 OVM          & 65.8\% ± 0.6\%     & 68.9\% ± 0.2\% \\
 PRM          & 65.9\% ± 0.6\%     & 69.1\% ± 0.3\% \\
 PRM-O        & 67.4\% ± 0.6\%     & 70.4\% ± 0.2\% \\
 \bottomrule   
 \end{tabular}

\end{table}

\paragraph{Analysis of the difference between Game of 24 and GSM8K} 
Distinct patterns emerge in comparing outcome supervision versus process supervision across Game of 24 and GSM8K, likely due to data specificity and data efficiency.

\textbf{Data specificity} concerns how well ``correctness'' aligns with ``helpfulness''. Our analysis shows a stark contrast in the consistency between PRM labels (logical correctness only) and PRM-O labels (emphasizing contribution to the correct answer) in Game of 24 and GSM8K, which are 56.9\% and 98.6\% respectively. 
This suggests that in Game of 24, logical correctness does not reliably predict answer success, making PRM vulnerable in such a scenario. Conversely, OVM/PRM-O, with its emphasis on the helpfulness towards the final answer, appears more robust. In GSM8K, where logically correct steps typically lead to correct outcomes (98.6\% of cases), PRM is nearly as effective as OVM in finding correct answers. 

\textbf{Data efficiency} considers the dataset size relative to task complexity. In scenarios like GSM8K, where the dataset is small for the task's complexity, PRM/PRM-O might offer more efficiency through detailed step-by-step supervision. However, in simpler tasks with adequately large datasets, such as the Game of 24, while fine-grained supervision might speed up training, it doesn't necessarily translate to better performance.

Overall, when considering both the performance and annotation costs (see the statistics in Appendix~\ref{app:annotate_cost}), outcome supervision demonstrates superior utility across various settings: it reaches competitive and even better performance than process supervision, with significantly reduced demands on annotation efforts.

\section{Related Works}

\paragraph{Complete path verification in mathematical reasoning} 
Mathematical reasoning presents significant challenges in arithmetic computation and complex, multi-step reasoning. The complexity of such tasks arises from the ease of making mistakes at each step, which can influence subsequent steps and final answers. In such scenarios, Verification has gained popularity as a means of improving accuracy by prioritizing the most plausible solutions among multiple alternatives (\citealp{generate_rank21}; \citealp{gsm8k21}; \citealp{back-verification22}; \citealp{CSV23}). A common implementation of verification involves training a specialized model to predict the correctness of complete solutions, which is called the verifier (\citealp{gsm8k21}; \citealp{deepmind-process22}; \citealp{li2023stepv}; \citealp{grace23}). In training verifiers, a debate exists between outcome-based and process-based supervision, with recent trends favouring process supervision (\citealp{deepmind-process22}; \citealp{openai-process23}). In this paper, we explore the potential of outcome supervision in planning.

\paragraph{Guided decoding in multi-step problem solving} 
Compared to selecting from the completed paths, it is more efficient to guide the model decoding in the middle of the process to filter harmful or less helpful steps for multi-step problem-solving. There are mainly two types of evaluation criteria for intermediate steps: reward-based (past-oriented) and value-based (future-oriented). Reward-based methods assess the intermediate steps according to their correctness or other characteristics of the steps already taken (\citealp{grace23}; \citealp{rap2023}; \citealp{xie2023guided}). In contrast, value-based methods evaluate the intermediate steps based on the potential outcomes in the unseen future (\citealp{tot23}; \citealp{rap2023}; \citealp{CoRe23}; \citealp{alphazero-like23}). The previous value-based approaches evaluate the future potential through simulation, utilizing Monte Carlo Tree Search (\citealp{rap2023}; \citealp{CoRe23}). ToT simplifies the simulation process using heuristics aided by GPT-4 in the Game of 24 (\citealp{tot23}). However, creating heuristics for more complex and realistic mathematical datasets, such as GSM8K, poses significant challenges. In this paper, we explore developing a specialized model to predict values on the fly without complex simulation.




\section{Conclusion}
In conclusion, this paper presents a novel approach in verifying intermediate steps and guiding model generation. This is achieved through the introduction of the Outcome-based Value Model (OVM), which employs outcome supervision in training a value model for intermediate steps. Both theoretical and empirical evidence highlight the effectiveness of outcome supervision for value estimation in planning, offer a method that is more efficient and effective than process supervision, which results in a reward-based model. The OVM, requiring no costly step-level annotations and fewer sampled paths, demonstrates superior performance in complex multi-step reasoning tasks, as evidenced by its state-of-the-art results on GSM8K and impressive success rate improvement in the Game of 24.



\section*{Limitations}
Guided decoding, while effective, introduces a limitation: it introduces an additional model to aid the generator decoding, which imposes a more substantial demand on memory resources and decelerates the inference process. This poses a significant challenge in some real-world applications where rapid response is crucial. 
Besides, our study does not delve into the costs associated with training a sufficiently accurate value model. 
While process supervision may enable the training of a reward model with a small dataset, outcome supervision could necessitate a considerably larger dataset for the effective training of a value model. This raises concerns about the scalability of such a system. 
Additionally, the generalization capability of the value model remains unexplored in our research. This omission leaves unanswered questions regarding the model's adaptability and performance consistency across diverse or unforeseen scenarios.

\section*{Acknowledgement}
This work is supported by the Shenzhen Science and Technology Program (JCYJ20220818103001002), Shenzhen Doctoral Startup Funding (RCBS20221008093330065), and Tianyuan Fund for Mathematics of National Natural Science Foundation of China (NSFC) (12326608).


\bibliography{anthology,custom}

\newpage
\appendix


\begin{table}[t]
\scriptsize
\centering
\resizebox{0.48\textwidth}{!}{
\begin{tabular}{cl}
\hline
\textbf{Notation} & \textbf{Description} \\ \hline
$q$ & Mathematical reasoning question requiring a sequence of steps \\
$\mathcal{Q}$ & A question set \\
$S$ & Solution path for a question, $S=[s^1, \dots, s^m,a]$ \\
$s^i$ & The i-th step in a solution path \\
$a$ & Final answer in a solution path \\
$m$ & Number of steps in a solution path \\
$y$ & A binary label, either 1 or 0, indicating the correctness of $a$ \\
$S^{(1:t)}$ & Partial solution path up to step $t$, $S=[s^1, \dots, s^t]$ \\
$\mathbb{S}^{(1:t)}$ & Set of candidate partial paths $\mathbb{S}^{(1:t)} = \bigl\{S^{(1:t)}_k \bigl\}_{k=1}^{K}$ \\
$K$ & Sampling size for candidates when inference \\
$b$ & Beam size for selecting top-scored candidates \\
$n$ & Number of sampled paths per question for training value models \\
$\Phi$ & The language model as the generator  \\
$f$ & A scoring model that maps a partial path to a number \\
$v^{(1:t)}$ & Value (a number) of a partial path up to step $t$ \\
$\bm{\theta}$ & Model parameter \\
PRM & Process Reward Model, trained with process supervision \\
ORM & Outcome Reward Model, trained with outcome supervision \\
OVM & Outcome Value Model, trained with outcome supervision \\
\bottomrule
\end{tabular}
}
\caption{Summary of Notations Used in the paper}
\label{tab:notations}
\end{table}

\begin{table*}[h]
    \small
    \centering
    \begin{tabular}{l|ll}
    \toprule
                        & Complete path evaluation & Partial path evaluation \\
    \midrule
    Process supervision & Reward                   & Reward \\
    Outcome supervision & Reward                   & \textcolor{red}{Value} \\
    \bottomrule
    \end{tabular}
    \caption{\label{tab:prm_ovm} Types of scores predicted by process- or outcome-supervised models on the complete path and partial path, respectively. When evaluating partial paths, the predicted scores of outcome-supervised models are values.}
\end{table*}

\section{Training Strategies}
\label{app:verifier}

\subsection{Outcome supervision for OVM}

We train OVM with outcome supervision.

\paragraph{Training labels in outcome supervision} In outcome supervision, each question-solution pair only requires a single binary label $y_o\in\{0,1\}$, indicating whether the final answer $a$ is correct or not. In practice, this label is expanded into a consistent vector, $\bm{y}^o=[y_o,\dots,y_o]$, matching the length of the token sequence to enhance the robustness of training procedures (\citealp{gsm8k21}).

\paragraph{Training objective in outcome supervision} Given the training data ($q$, $S$, $\bm{y}^o$), the mean squared error loss is calculated as
$$l(S,\bm{y}^o;q)=\left||f(q;S)-\bm{y}^o\right||^2$$
Additionally, the model is jointly trained with language modeling loss unweighted, following \citet{gsm8k21}.

\subsection{Process supervision for PRM}
We train reward models (PRM and PRM-O) for comparison with process supervision. 

\paragraph{Training labels in process supervision} In process supervision, each question-solution pair requires a vector of labels, $[y_1, \dots, y_m]$, corresponding to the number of steps involved, denoted by $m$. Each element within this vector indicates the correctness of its respective step. In practice, this vector is expanded to align with the token sequence length by attributing the identical label, $y_i$, across all tokens within the i-th step. This results in the final label vector $\bm{y}^p=[y_1,\dots,y_1,\dots,y_m,\dots,y_m]$.

\paragraph{Training objective in process supervision} Process supervision shares the same training objective as outcome supervision, but differs in training labels.
$$l(S,\bm{y}^p;q)=\left||f(q;S)-\bm{y}^p\right||^2$$

\section{Implementation Details}
\label{app:implementation}

\paragraph{Training generators} We use the newline character as the marker for the end of each step. In GSM8K, we fine-tuned Mistral-7B and Llama2-7B on the training set. Given that GSM8K provides calculation annotations, our models were also trained to utilize calculators. In Game of 24, we fine-tuned Llama2-7B on problem indices 1-900 with enumerated solution paths. For both datasets, the fine-tuning was carried out for 2 epochs, with a batch size of 128. We set the maximum learning rate to 1e-5, using a linear scheduler with the AdamW optimizer. We implement FlashAttention for Llama2-7B (\citealp{flashattention22}, \citealp{flashattention23}).



\paragraph{Building training dataset for OVM}
Given the training question set, we first sample 100 solution paths for each question from the generator. The decoding temperature is 0.7, top k is 50, top p is 1.0 and the maximum new tokens is 400. Then, we detect the answer correctness for each sample. In GSM8K, the answer correctness is determined by exact string matching to the ground truth since all the answers are integers. In Game of 24, the answer correctness is based on the validity of the equation equating to 24 and the singular usage of input numbers, following~\citep{tot23}. This allows us to produce numerous OVM training samples using just question-answer pairs, without path annotations, resulting in 747,300 training samples for GSM8K and 90,000 for Game of 24.

\paragraph{Training OVMs/ORMs}~\footnote{Since we train the value model with outcome supervision, the objective originally intended for ORM, the same model is simultaneously used as OVM for partial paths and ORM for complete paths in this paper.}
OVMs were initialized from the corresponding generator checkpoints.  In GSM8K, OVM was trained for 1 epoch with a batch size of 512. In Game of 24, OVM is trained for 10 epochs with a batch size of 128, due to its smaller training set. The optimizer was AdamW and the maximum learning rate was set to 1e-5 for Llama2-7B and 2e-6 for Mistral-7B respectively, following a linear scheduler.

\paragraph{Training PRM and PRM-O} Same as OVM, PRM and PRM-O were initialized from the corresponding generator checkpoints. The training dataset is the same set of question-solution pairs as in OVM, but details per-step correctness as training labels. See the annotation details in Appendix~\ref{app:process_labels}. All the training hyperparameters are consistent to OVM's.

\paragraph{Value-guided beam search}
The decoding temperature is 0.7, top k is 50, and top p is 1.0. We set the maximum new token length as 400 and the maximum number of steps as 10. In Game of 24, the generator produces more duplicated outputs due to the small output space. During the beam search process, we give priority to non-duplicate sequences for selection.

\section{Process Label Annotation}
\label{app:process_labels}

\subsection{Annotation protocol}
\paragraph{Game of 24} 
We derive process labels by checking the syntax and calculation, and matching to all possible correct solutions, enumerated by rules. Specifically, for PRM training (logical correctness only), steps are labeled as 1 when the steps are logically correct in rules, i.e. the calculation is correct and each used number is given and only used in once. For PRM-O training (logical correctness and helpfulness), steps are labeled as 1 when they correspond to any of the enumerated feasible correct solution paths.

\paragraph{GSM8K}
We query GPT-4 to annotate process labels without references. GPT-4 is asked to classify each step into ``correct'', ``incorrect'', or ``unnecessary''. The used prompt is shown as follows:

\vspace{2mm}
\hrule
\begin{quote}
[Question] \\
\texttt{question} \\

[Correct Answer] \\
\texttt{answer} \\

[Solution] \\
\texttt{solution path} \\

[System] \\
You are an expert math examiner. Review the student's solution and mark each step as correct only if it's based on accurate premises and helps solve the problem. Mark it as "unnecessary" when it is logically valid but doesn't help. \\
Please mark with "[Conclusion]" and summary all your judgements in the format of "Step i is correct/incorrect/unnecessary".

\end{quote}
\hrule
\vspace{2mm}

We label ``correct'' steps as 1 and ``incorrect'' steps as 0. For PRM training, ``unnecessary'' steps are labeled as 1, while for PRM-O training, they are labeled as 0.

\subsection{Consistency evaluation for GPT-4 labeling in GSM8K}
We conduct a consistency evaluation of GPT-4 labeling compared to human labeling on a small set. 

\paragraph{Evaluation set construction}
To ensure coverage of this set across paths of different lengths, we randomly select two solutions from each length set, including one with the correct final answer and one with an incorrect answer. For instance, we sample a correct solution and an incorrect solution from both the 1-step path set and the 2-step path set. Additionally, we apply the same sampling procedure to the sets classified by the step length of reference solutions. Finally, we get 116 question-solution pairs.

\paragraph{Human evaluation}
We hire three master-level students to annotate those examples as ``correct'', ``incorrect'', or ``unnecessary''. 

\paragraph{Consistency analysis}
The agreement rates between GPT-4 labels and human labels are shown in Table~\ref{tab:agreement}. This indicates GPT-4 can provide process labels with high consistency to humans.

\begin{table}[h]
    \small
    \centering
    \begin{tabular}{l|llll}
    \toprule
                        & Human 1 & Human 2 & Human 3 & GPT-4 \\
    \midrule
    Human 1             & -       & 0.89    & 0.88    & 0.87 \\
    Human 2             & 0.89    & -       & 0.91    & 0.86 \\
    Human 3             & 0.88    & 0.91    & -       & 0.86 \\
    GPT-4               & 0.87    & 0.86    & 0.86    & -    \\
    \bottomrule
    \end{tabular}
    \caption{\label{tab:agreement} Agreement rates on 116 samples between GPT-4 labeling and human labeling}
\end{table}

\subsection{Annotation cost}
\label{app:annotate_cost}
PRM and PRM-O incur a considerably higher annotation cost compared to OVM due to the need for detailed per-step correctness assessments. With $N$ questions and $n$ sampled solution paths per question, and an average step count of $m$ for these paths, the annotation cost for process supervision scales as $O(Nnm)$. In contrast, the annotation cost for outcome supervision is $O(Nn)$, requiring only the final answer's correctness for each question-solution pair. Specifically, see the data statistics in Table~\ref{tab:annotate_cost}. 

\paragraph{Cost comparison in Game of 24}
In Game of 24, the annotation cost for PRM and PRM-O is four times higher than that of OVM, corresponding to the average number of steps in solution paths. Clearly, the longer the solution path, the greater the annotation cost disparity between process supervision and outcome supervision.

\paragraph{Cost comparison in GSM8K} 
In GSM8K, where each question has a unique answer, the final answer correctness for each sampled solution can be derived by comparing the final answer to the ground truth. This process significantly lowers the annotation cost for outcome supervision to $O(N)$, compared to $O(Nn(m-1)+N)$ for process supervision. Consequently, for OVM, 7,473 annotations are needed (equivalent to the number of questions), whereas PRM and PRM-O require 2,619,923 annotations — 350 times more than OVM. 

These comparisons underscore OVM's lower annotation cost and better scalability.

\begin{table*}[h]
    \scriptsize
    \centering
    \begin{tabular}{l|l|ll|ll|ll|ll}
    \toprule
    & \textbf{\#Questions} & \multicolumn{2}{c|}{\textbf{\#Solutions}} & \multicolumn{2}{c|}{\textbf{\#Steps}} & \multicolumn{2}{c|}{\textbf{Cost in outcome supervision}} & \multicolumn{2}{c}{\textbf{Cost in process supervision}} \\ 
    & all        & per question & all & per solution & all & labels & annotations & labels & annotations \\ 
    \midrule
    Game of 24 & 900              & 100                        & 90,000     & 4.01                   & 360,914 & 90,000 & 90,000 & 360,914 & 360,914  \\ 
    GSM8K & 7,473              & 100                        & 747,300     & 4.50                   & 3,359,750 & 747,300 & 7,473 & 3,359,750 & 2,619,923 \\ 
    \bottomrule
    \end{tabular}
    \caption{\label{tab:annotate_cost} Label and annotation statistics of outcome supervision and process supervision in GSM8K and Game of 24.}
\end{table*}

\section{Detailed Experiment Results and Hyperparameter Analysis}
\label{app:results}

There are two critical hyperparameters in value-guided beam search: sampling size and beam size. We present the detailed results across various sampling sizes and beam sizes in Table~\ref{tab:gsm8k_results_llama}, Table~\ref{tab:gsm8k_results_mistral} and Table~\ref{tab:game24_results}.

\subsection{Impact of beam sizes on OVM planning}
In this section, we mainly explore the impact of beam size choices on OVM performance. Notably, there is one special case: when the beam size is equal to the sampling size, the approach functions as vanilla sampling rather than guided decoding, as it omits any intermediate selection or pruning.

\paragraph{Inference cost is consistent across various beam sizes} 
Regardless of the beam size, given a fixed sampling size, the inference cost typically remains unchanged. This uniformity arises because the generator produces a consistent number of next steps (i.e., the sampling size) at each level of the tree, leading to stable peak memory usage and inference time which is primarily influenced by the generation phase, not by beam selection or data storage.

\paragraph{The impact of beam size on OVM effectiveness} We can observe from the tables that

\textbf{(1) A relatively large beam size enhances the accuracy}. 
In GSM8K (Table~\ref{tab:gsm8k_results_llama} and Table~\ref{tab:gsm8k_results_mistral}), accuracy improves with an increase in beam size, but the proportion of correct answers first rises then falls. In Game of 24 (Table~\ref{tab:game24_results}), accuracy initially increases before declining, while the correct answer proportion consistently decreases. These observations imply that a larger beam size can positively impact accuracy up to a point, beyond which it may become detrimental. We attribute the pattern to (1) an initial reduction in error propagation risk with increasing beam size, leading to higher accuracy, and (2) an excessively large beam size potentially introducing incorrect solutions, as indicated by the drop in correct answer proportion, thereby increasing the risk of false positives.

\textbf{(2) OVM's superiority over vanilla sampling is robust}. 
As shown in Table~\ref{tab:gsm8k_results_llama}-Table~\ref{tab:game24_results}, OVM demonstrates a robust and consistently superior planning capability compared to vanilla sampling, as evidenced by a higher proportion of correct answers across all beam sizes, including when the beam size is 1 (losing the advantage of error propagation), in both GSM8K and Game of 24. Additionally, OVM achieves better accuracy across a range of moderate beam sizes, rather than limited to specific settings, indicating the effectiveness of OVM planning over vanilla sampling is not a result of cherry pick. For example, any beam size of 4 or greater improves accuracy over vanilla sampling for K=20 in GSM8K (Table~\ref{tab:gsm8k_results_llama} and Table~\ref{tab:gsm8k_results_mistral}). Similarly, for K=100 in Game of 24, all beam sizes between 10 and 25 surpass the performance of vanilla sampling (Table~\ref{tab:game24_results}). The small standard variation further underscores the reliability of these improvements.

\subsection{Comparison between outcome supervision and process supervision}

\paragraph{Comparison between two supervision strategies in guided decoding across various beam sizes} 
When evaluating the performance across a spectrum of beam sizes beyond just the peak performance, the analysis consistently shows that outcome supervision is competitive and even better than process supervision in terms of effectiveness. Specifically,

\textbf{(1) Outcome supervision excels in Game of 24}. 
According to Table~\ref{tab:game24_results}, OVM outperforms PRM in terms of both accuracy and correct answer proportion across all beam sizes. When compared to PRM-O, OVM demonstrates superior overall performance. Specifically, OVM achieves higher accuracy than PRM-O in 4 out of 5 scenarios at K=20, and in 4 out of 8 scenarios at K=100.

\textbf{(2) Outcome supervision holds up well in GSM8K}. 
In Table~\ref{tab:gsm8k_results_llama}, the superiority of PRM-O over PRM consistently underscores the value of focusing on outcomes. OVM's performance is closely matched with PRM, reaching higher accuracy in 6 out of 13 scenarios for K=20 and K=100. While OVM generally trails behind PRM-O across all beam sizes, the difference is typically narrow, often within a 3-point margin. 

Overall, considering the annotation costs in Appendix~\ref{app:annotate_cost}, OVM demonstrates superior utility in both settings.

\paragraph{Comparison between two supervision strategies in complete path verification} 
When evaluating the performance of complete path verification (vanilla sampling along with post-selection), it appears that process supervision does not necessarily outperform outcome supervision. This observation contrasts with previous findings, which suggested process supervision is either on par with (\citealp{deepmind-process22}) or superior to outcome supervision (\citealp{openai-process23}) in certain contexts. See the analysis below for this unexpected phenomenon:

\textbf{(1) Outcome supervision exploits shortcuts in Game of 24}. 
Table~\ref{tab:game24_results} indicates that ORM (outcome supervision) surpasses both PRM and PRM-O (process supervision). Upon closer examination of the data, we identified cases where intermediate steps were incorrect, yet the final answers were correct. These cases imply that the generator might occasionally find the right answers by chance. Process-supervised models miss these instances due to their incorrect pathways, whereas outcome-supervised models benefit from these ``shortcuts'' by prioritizing the accuracy of the final answer, irrespective of the process taken to arrive there.

\textbf{(2) Two potential factors influencing the results in GSM8K}. 
In GSM8K, ORM initially lags behind PRM and PRM-O at K=20 but outperforms them at K=100, as shown in Table~\ref{tab:gsm8k_results_llama}. This shift might be attributed to two potential factors: the presence of shortcuts and the quality of process labels. Firstly, similar to Game of 24, shortcuts also exist in GSM8K, which might explain the parallel findings by \citet{deepmind-process22} that outcome supervision and process supervision perform comparably in GSM8K. As the number of sampled paths increases, ORM's chances of exploiting a shortcut also rise, thereby enhancing its performance over PRM and PRM-O. Secondly, the discrepancy in process label quality might influence results. According to Table~\ref{tab:agreement}, the average human agreement rate is 89.3\%, while the average human-GPT4 agreement rate is 86.3\% with a difference of 3 percentage points. This underscores the complexities involved in annotating process labels.

\begin{table*}[h]
    \scriptsize
    \centering
    \begin{tabular}{cl|cc|cc|cc}
    \toprule
    \multirow{2}{*}{Sampling size} & \multirow{2}{*}{Beam size} & \multicolumn{2}{c|}{OVM} & \multicolumn{2}{c|}{PRM} & \multicolumn{2}{c}{PRM-O} \\
    & & Accuracy & Proportion & Accuracy & Proportion & Accuracy & Proportion \\
    \midrule
    \multirow{6}{*}{20} & 20$\dagger$ & 65.5\% ± 0.7\% & 32.9\% ± 0.2\% & 66.4\% ± 0.5\% & 32.9\% ± 0.2\% & 66.6\% ± 0.5\% & 32.9\% ± 0.2\% \\
     & 10 & 69.0\% ± 0.4\% & 62.3\% ± 0.6\% & 69.4\% ± 0.4\% & 63.0\% ± 0.3\% & 69.6\% ± 0.3\% & 63.2\% ± 0.6\% \\
     & 5 & 68.9\% ± 0.3\% & 65.4\% ± 0.4\% & 69.6\% ± 0.7\% & 65.9\% ± 0.6\% & 71.3\% ± 1.0\% & 67.4\% ± 0.6\% \\
     & 4 & 69.0\% ± 0.3\% & 65.7\% ± 0.9\% & 69.2\% ± 0.8\% & 65.3\% ± 0.6\% & 70.7\% ± 0.7\% & 66.4\% ± 0.3\% \\
     & 2 & 67.8\% ± 0.3\% & 65.8\% ± 0.6\% & 68.4\% ± 0.7\% & 65.9\% ± 0.7\% & 69.2\% ± 0.6\% & 66.4\% ± 0.6\% \\
     & 1 & 55.9\% ± 0.1\% & 55.2\% ± 0.1\% & 66.8\% ± 0.8\% & 65.4\% ± 0.8\% & 67.4\% ± 0.9\% & 66.0\% ± 0.9\% \\
     \midrule
    \multirow{6}{*}{50} & 50$\dagger$ & 70.1\% ± 0.2\% & 32.9\% ± 0.1\% & - & - & - & - \\
     & 25 & 72.6\% ± 0.4\% & 65.0\% ± 0.2\% & - & - & - & - \\
     & 10 & 71.2\% ± 0.3\% & 67.1\% ± 0.2\% & - & - & - & - \\
     & 5 & 71.1\% ± 0.6\% & 67.8\% ± 0.6\% & - & - & - & - \\
     & 2 & 70.1\% ± 0.8\% & 68.3\% ± 0.8\% & - & - & - & - \\
     & 1 & 55.9\% ± 0.2\% & 55.3\% ± 0.2\% & - & - & - & - \\
     \midrule
    \multirow{8}{*}{80} & 80$\dagger$ & 70.5\% ± 0.1\% & 32.8\% ± 0.1\% & - & - & - & - \\
     & 40 & 72.4\% ± 0.3\% & 65.9\% ± 0.1\% & - & - & - & - \\
     & 20 & 71.8\% ± 0.1\% & 68.6\% ± 0.3\% & - & - & - & - \\
     & 10 & 71.6\% ± 0.2\% & 68.5\% ± 0.1\% & - & - & - & - \\
     & 5 & 70.4\% ± 0.8\% & 68.2\% ± 1.0\% & - & - & - & - \\
     & 4 & 70.9\% ± 0.7\% & 68.5\% ± 0.7\% & - & - & - & - \\
     & 2 & 69.4\% ± 0.8\% & 68.0\% ± 1.1\% & - & - & - & - \\
     & 1 & 67.5\% ± 1.3\% & 66.6\% ± 1.3\% & - & - & - & - \\
     \midrule
    \multirow{9}{*}{100} & 100$\dagger$ & 71.9\% ± 0.6\% & 32.9\% ± 0.01\% & 70.8\% ± 0.7\% & 32.9\% ± 0.01\% & 71.4\% ± 0.7\% & 32.9\% ± 0.01\% \\
     & 50 & 73.8\% ± 0.4\% & 65.6\% ± 1.5\% & 72.2\% ± 0.3\% & 68.0\% ± 0.2\% & 74.2\% ± 0.4\% & 67.9\% ± 0.1\% \\
     & 25 & 73.1\% ± 0.5\% & 68.9\% ± 0.2\% & 72.1\% ± 0.2\% & 69.1\% ± 0.3\% & 74.9\% ± 0.2\% & 70.4\% ± 0.2\% \\
     & 20 & 72.1\% ± 0.5\% & 68.4\% ± 0.3\% & 72.1\% ± 0.2\% & 68.5\% ± 0.4\% & 74.3\% ± 0.4\% & 70.2\% ± 0.2\% \\
     & 10 & 71.0\% ± 0.4\% & 67.9\% ± 0.3\% & 71.3\% ± 0.4\% & 67.6\% ± 0.6\% & 73.8\% ± 0.3\% & 69.5\% ± 0.1\% \\
     & 5 & 70.1\% ± 0.3\% & 68.2\% ± 0.1\% & 70.1\% ± 0.4\% & 67.3\% ± 0.4\% & 72.8\% ± 0.5\% & 69.6\% ± 0.4\% \\
     & 4 & 70.6\% ± 0.7\% & 68.4\% ± 0.3\% & 69.4\% ± 0.6\% & 66.3\% ± 0.2\% & 72.8\% ± 0.2\% & 69.4\% ± 0.1\% \\
     & 2 & 69.0\% ± 0.5\% & 67.5\% ± 0.5\% & 68.4\% ± 0.2\% & 65.6\% ± 0.4\% & 71.9\% ± 0.1\% & 69.0\% ± 0.1\% \\
     & 1 & 67.8\% ± 0.7\% & 67.1\% ± 0.7\% & 67.0\% ± 1.0\% & 65.6\% ± 1.1\% & 71.4\% ± 0.3\% & 69.2\% ± 0.2\% \\
    \bottomrule
    \end{tabular}
    \caption{\label{tab:gsm8k_results_llama}Answer and correct answer proportion across various sampling sizes and beam sizes in GSM8K (Llama2-7B). ``Proportion'' denotes ``correct answer proportion''. `$\dagger$' denotes the setting of ``vanilla sampling + post-selection''. Due to resource constraints, experiments with PRM and PRM-O were limited to sampling sizes of K=20 and K=100.}
\end{table*}

\begin{table*}[h]
    \centering
    \begin{tabular}{cl|cc}
    \toprule
    \multirow{2}{*}{Sampling size} & \multirow{2}{*}{Beam size} & \multicolumn{2}{c}{OVM} \\
    & & Accuracy & Proportion \\
    \midrule
    \multirow{6}{*}{20} & 20$\dagger$ & 81.8\% ± 0.2\%  & 52.4\% ± 0.2\% \\
     & 10 & 82.6\% ± 0.1\% & 78.1\% ± 0.3\% \\
     & 5 & 82.1\% ± 0.4\% & 80.1\% ± 0.4\% \\
     & 4 & 82.1\% ± 0.3\% & 80.0\% ± 0.2\% \\
     & 2 & 81.7\% ± 0.3\% & 80.6\% ± 0.2\% \\
     & 1 & 80.1\% ± 0.8\% & 79.7\% ± 0.7\% \\
    \midrule
    \multirow{9}{*}{100} & 100$\dagger$ & 84.7\% ± 0.4\%  & 52.4\% ± 0.1\% \\
     & 50 & 84.7\% ± 0.4\% & 80.9\% ± 0.3\% \\
     & 25 & 84.7\% ± 0.3\% & 82.0\% ± 0.1\% \\
     & 20 & 84.3\% ± 0.1\% & 81.2\% ± 0.1\% \\
     & 10 & 84.2\% ± 0.4\% & 81.7\% ± 0.4\% \\
     & 5 & 83.0\% ± 0.4\% & 81.3\% ± 0.4\% \\
     & 4 & 83.2\% ± 0.7\% & 81.4\% ± 0.8\% \\
     & 2 & 82.0\% ± 0.1\% & 81.0\% ± 0.3\% \\
     & 1 & 81.2\% ± 0.4\% & 80.8\% ± 0.5\% \\
    \bottomrule
    \end{tabular}
    \caption{\label{tab:gsm8k_results_mistral}Answer and correct answer proportion across various sampling sizes and beam sizes in GSM8K (Mistral-7B). ``Proportion'' denotes ``correct answer proportion''. `$\dagger$' denotes the setting of ``vanilla sampling + post-selection''. Due to resource constraints, experiments were limited to sampling sizes of K=20 and K=100 with OVM.}
\end{table*}

\begin{table*}[h]
    \scriptsize
    \centering
    \begin{tabular}{cl|cc|cc|cc}
    \toprule
    \multirow{2}{*}{Sampling size} & \multirow{2}{*}{Beam size} & \multicolumn{2}{c|}{OVM} & \multicolumn{2}{c|}{PRM} & \multicolumn{2}{c}{PRM-O} \\
    & & Accuracy & Proportion & Accuracy & Proportion & Accuracy & Proportion \\
    \midrule
    \multirow{6}{*}{20} & 20$\dagger$ & 65.3\% ± 5.3\%  & 8.8\% ± 0.5\%& 60.3\% ± 4.2\%  & 8.8\% ± 0.5\% & 61.7\% ± 5.4\% & 8.8\% ± 0.5\% \\
     & 10 & 72.3\% ± 2.6\% & 16.0\% ± 0.8\% & 53.3\% ± 3.3\% & 9.2\% ± 0.4\% & 68.7\% ± 1.7\% & 15.5\% ± 0.3\% \\
     & 5 & 78.0\% ± 2.2\% & 36.8\% ± 2.7\% & 27.7\% ± 4.1\% & 8.5\% ± 1.5\% & 73.3\% ± 2.1\% & 33.7\% ± 1.9\% \\
     & 4 & 78.7\% ± 1.7\% & 46.7\% ± 1.4\% & 24.0\% ± 2.2\% & 8.4\% ± 0.4\% & 77.7\% ± 2.6\% & 42.4\% ± 0.2\% \\
     & 2 & 76.0\% ± 4.5\% & 61.7\% ± 4.3\% & 9.7\% ± 2.5\% & 6.0\% ± 1.2\% & 78.7\% ± 2.4\% & 63.3\% ± 1.9\% \\
     & 1 & 76.7\% ± 2.5\% & 76.7\% ± 2.5\% & 6.0\% ± 0.8\% & 6.0\% ± 0.8\% & 72.7\% ± 0.9\% & 73.0\% ± 0.8\% \\
    \midrule
    \multirow{6}{*}{50} & 50$\dagger$ & 86.3\% ± 3.4\%  & 8.6\% ± 0.3\%  & - & - & - & - \\
     & 25 & 90.0\% ± 0.0\% & 13.2\% ± 0.2\% & - & - & - & - \\
     & 10 & 92.7\% ± 0.9\% & 31.6\% ± 0.2\% & - & - & - & - \\
     & 5 & 89.7\% ± 0.5\% & 56.8\% ± 0.2\% & - & - & - & - \\
     & 2 & 87.3\% ± 0.5\% & 76.3\% ± 0.2\% & - & - & - & - \\
     & 1 & 84.7\% ± 1.2\% & 84.7\% ± 1.2\% & - & - & - & - \\
    \midrule
    \multirow{8}{*}{80} & 80$\dagger$ & 95.7\% ± 1.9\%  & 8.5\% ±  0.3\% & - & - & - & -  \\
     & 40 & 95.0\% ± 0.0\% & 12.0\% ± 0.0\% & - & - & - & - \\
     & 20 & 96.0\% ± 0.0\% & 20.6\% ± 0.4\% & - & - & - & - \\
     & 10 & 97.3\% ± 0.5\% & 40.2\% ± 0.9\% & - & - & - & - \\
     & 5 & 92.0\% ± 0.8\% & 63.3\% ± 1.2\% & - & - & - & - \\
     & 4 & 92.3\% ± 0.5\% & 70.0\% ± 1.1\% & - & - & - & - \\
     & 2 & 88.7\% ± 0.9\% & 79.0\% ± 0.4\% & - & - & - & - \\
     & 1 & 85.0\% ± 2.2\% & 85.0\% ± 2.2\% & - & - & - & - \\
    \midrule
    \multirow{9}{*}{100} & 100$\dagger$ & 95.3\% ± 0.5\%  & 8.6\% ± 0.1\% & 93.3\% ± 0.9\%  & 8.6\% ± 0.1\% & 93.3\% ± 0.9\% & 8.6\% ± 0.1\% \\
     & 50 & 94.3\% ± 1.7\% & 13.2\% ± 0.6\% & 88.7\% ± 0.5\% & 7.7\% ± 0.2\% & 94.3\% ± 0.9\% & 13.3\% ± 0.2\% \\
     & 25 & 98.3\% ± 1.2\% & 18.7\% ± 0.5\% & 76.7\% ± 2.9\% & 7.8\% ± 0.4\% & 94.7\% ± 1.9\% & 17.2\% ± 0.9\% \\
     & 20 & 95.7\% ± 0.9\% & 22.7\% ± 0.4\% & 65.7\% ± 2.9\% & 7.3\% ± 0.3\% & 95.0\% ± 1.4\% & 21.3\% ± 0.7\% \\
     & 10 & 97.7\% ± 0.5\% & 43.3\% ± 0.5\% & 35.3\% ± 3.4\% & 6.8\% ± 0.7\% & 95.3\% ± 1.2\% & 40.0\% ± 0.9\% \\
     & 5 & 93.3\% ± 1.2\% & 66.3\% ± 0.9\% & 23.0\% ± 1.6\% & 6.3\% ± 0.6\% & 96.7\% ± 0.5\% & 64.6\% ± 0.9\% \\
     & 4 & 91.3\% ± 0.5\% & 70.5\% ± 0.9\% & 17.3\% ± 2.5\% & 6.0\% ± 0.7\% & 95.3\% ± 0.9\% & 68.9\% ± 0.8\% \\
     & 2 & 89.3\% ± 1.9\% & 80.7\% ± 1.6\% & 7.0\% ± 1.4\% & 5.3\% ± 0.8\% & 91.0\% ± 0.0\% & 79.7\% ± 1.7\% \\
     & 1 & 84.3\% ± 0.5\% & 84.3\% ± 0.5\% & 4.7\% ± 1.2\% & 4.7\% ± 1.2\% & 84.3\% ± 0.9\% & 84.7\% ± 1.2\% \\
    \bottomrule
    \end{tabular}
    \caption{\label{tab:game24_results}Answer and correct answer proportion across various sampling sizes and beam sizes in Game of 24. ``Proportion'' denotes ``correct answer proportion''. `$\dagger$' denotes the setting of ``vanilla sampling + post-selection''. Due to resource constraints, experiments with PRM and PRM-O were limited to sampling sizes of K=20 and K=100.}
\end{table*}



\begin{figure*}[ht]
    \centering
    \subfigbottomskip=2pt
    \subfigcapskip=-2pt
    \subfigure[Accuracy in GSM8K]{
        \includegraphics[width=0.45\linewidth]{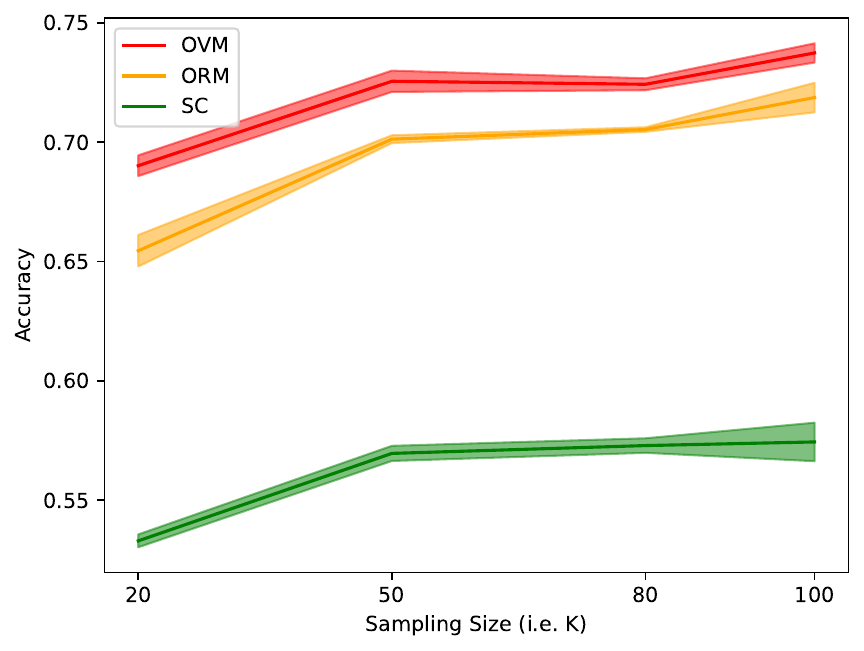}}
    \subfigure[Accuracy in Game of 24]{
        \includegraphics[width=0.45\linewidth]{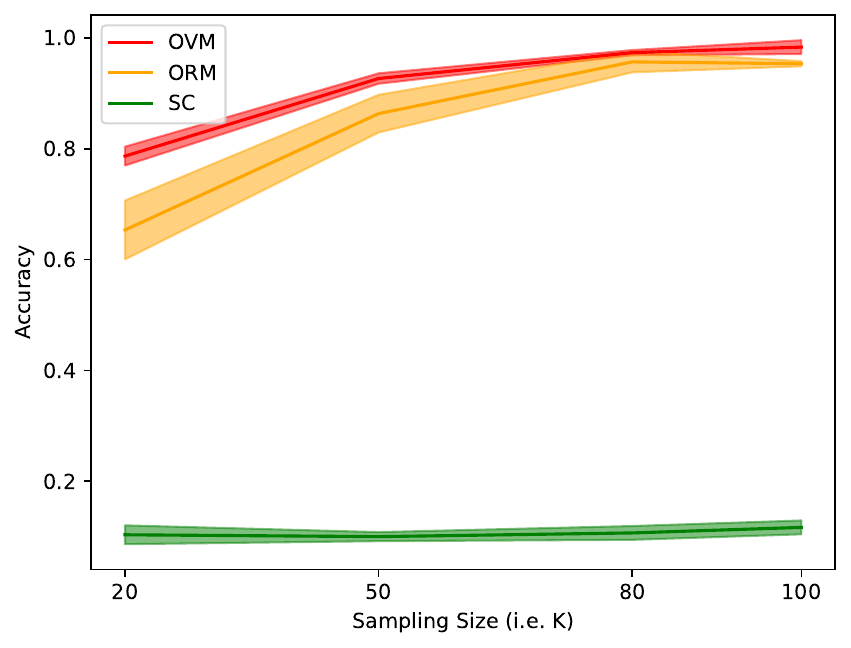}}
    \\
    \subfigure[Correct answer proportion in GSM8K]{
        \includegraphics[width=0.45\linewidth]{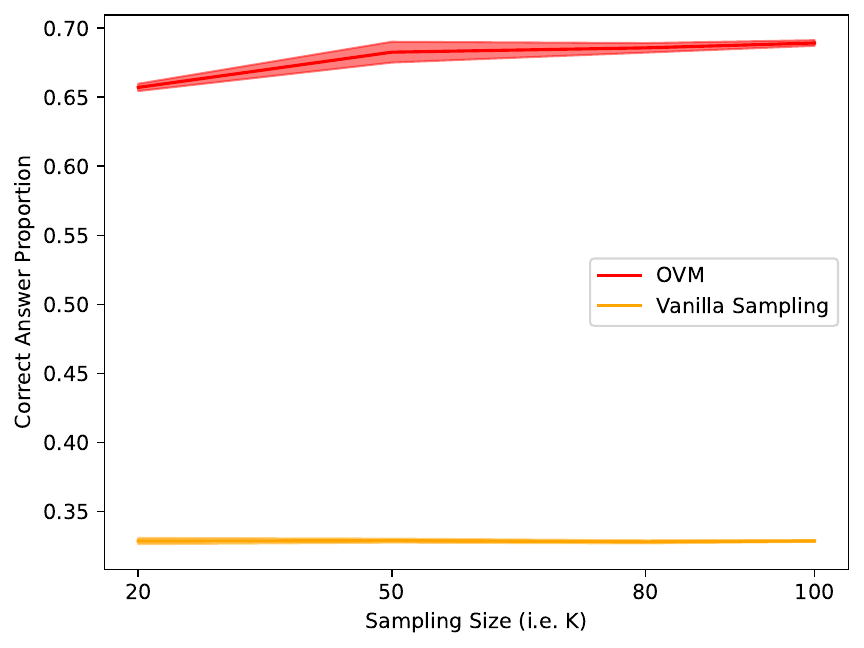}}
    \subfigure[Correct answer proportion in Game of 24]{
        \includegraphics[width=0.45\linewidth]{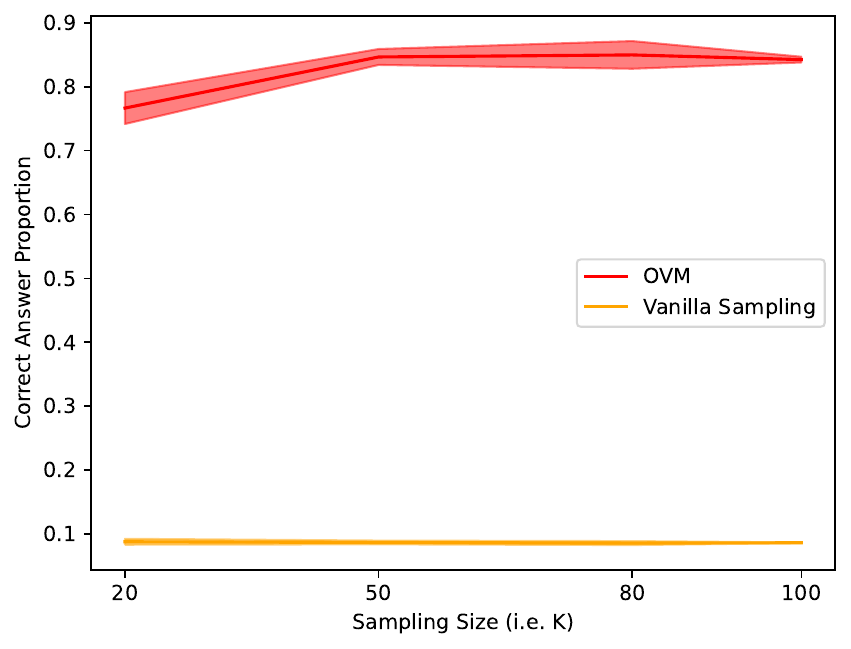}}
    \caption{\label{fig:all_k_acc_proportion} The tendency of accuracy and correct answer proportion with respect to the sampling size (Llama2-7B)}
\end{figure*}

\end{document}